\newtheorem{definition}{Definition}
\newtheorem{proposition}{Proposition}
\pgfplotsset{compat=1.14}
\title{Structural Robustness for Deep Learning Architectures}
\name{Carlos Lassance$^{\star,\dagger}$ \qquad Vincent Gripon$^{\star,\dagger}$\qquad Jian Tang$^{\diamond}$\qquad Antonio Ortega$^{\circ}$}
\address{$^{\star}$ Université de Montréal, Mila\\ $^{\dagger}$ IMT Atlantique, Lab-STICC \\ $^\diamond$ HEC Montréal, Mila \\ $^{\circ}$ University of Southern California\\}
\newcommand{\vx}{\mathbf{x}}
\newcommand{\vy}{\mathbf{y}}
\newcommand{\vc}{\mathbf{c}}
\newcommand{\vepsilon}{\mathbf{\varepsilon}}
\begin{document}

\maketitle

\begin{abstract}
Deep Networks have been shown to provide state-of-the-art performance in many machine learning challenges. Unfortunately, they are susceptible to various types of noise, including adversarial attacks and corrupted inputs. In this work we introduce a formal definition of robustness which can be viewed as a localized Lipschitz constant of the network function, quantified in the domain of the data to be classified. 
We compare this notion of robustness to existing ones, and study its connections with methods in the literature. We evaluate this metric by performing experiments on various competitive vision datasets.
\end{abstract}

\section{Introduction}

In recent years it has been shown that Deep Learning Architectures can provide state-of-the-art performance in many machine learning challenges, ranging from domains as distinct as vision~\cite{alexnet,resnet,shakeshake} and natural language processing~\cite{nmt,elmo}. This success can be justified based on their universal approximation
properties~\cite{universalapproximators}, which allow them to approximate any function that associates each training set input to its corresponding class. But this is also a double-edged sword, as the resulting function may not handle well domain shifts (i.e., it does not generalize well to previously unseen inputs). Adversarial attacks (i.e., imperceptible changes to the input built specifically to fool the network function)~\cite{szegeny,goodfellowadversarial} illustrate the risks of bad generalization. Isotropic noise~\cite{mallat} or corrupted inputs~\cite{hendrycks2019robustness} are also likely to produce similar misclassifications. In applications that are very sensitive to errors, such as autonomous vehicles or robotic assisted surgery, robustness to such deviations is a key challenge.

In the literature, several methods have been proposed to increase the robustness of network functions. A first set of approaches proposes to artificially increase the size of the training set by augmenting it with corrupted inputs~\cite{ford2019adversarial,madry,ladder,kurakin}. Then, during the training phase the network function becomes increasingly robust to the corresponding corruptions.
However, there is no guarantee that increasing robustness to a specific type of corruptions leads to better performance on other types of corruptions, as discussed in~\cite{hendrycks2019robustness,engstrom2018evaluating}.

To achieve universal robustness, other approaches target structural properties of the network function, such as constraining its Lipschitz constant to be small. Recall that a function $F$ is said to be $\alpha$-Lipschitz with respect to a norm $\|\cdot\|$ if $\|F(y)-F(x)\| \leq \alpha \|y-x\|, \forall x,y$. Provided $\alpha$ is small, such a function is robust to small deviations around correctly classified inputs, as it holds that: $\|F(x+\varepsilon)-F(x)\| \leq \alpha\|\varepsilon\|$. For example in Parseval Networks~\cite{parseval}, the authors softly enforce the network $\mathcal{L}_2$ and $\mathcal{L}_\infty$ Lipschitz constants to be bounded. 
Another example is~\cite{l2nonexpansive} where the authors propose to bound only the $\mathcal{L}_2$ norm of the network. 

Yet imposing a small Lipschitz constraint on a network function can be problematic. Indeed, the Lipschitz constraint is controlling the slope of the function {\em everywhere in the input space}. However, in the context of classification we expect that there can be sharp transitions in the network function near class boundaries, while the network function should be smooth away from the boundaries. In other words, since the smoothness properties of the network function are location-dependent (e.g., different behavior close to class boundaries) global Lipschitz metrics may not be meaningful.   
To illustrate this point, consider Figure~\ref{fig:incompatibility_lipschitz}, where we depict the proportion of pairs of training set inputs of distinct classes that are incompatible with a given Lipschitz constraint on the network function, for various datasets~\cite{cifar10,chrabaszcz2017downsampled} and for the $\mathcal{L}_{\infty}$ norm. In this example, the network function is taken to be the one-hot-bit encoded vectors of the corresponding classes. This example illustrates that for such a sharp network function a global Lipschitz constraint is not meaningful: unless the Lipschitz constant is large (e.g., greater than 4) imposing a constraint will prevent the training error from converging to zero.  This example also suggests two related principles that can lead to better robustness and motivate our proposed robustness metric: i) robust network functions should not have sharp transitions in boundary regions, ii) smoothness metrics should be localized. 

\begin{figure}
    \centering
    \begin{tikzpicture}
     \begin{scope}[scale=1]
       \begin{axis}[width=8cm,height=3.7cm,mark size=0.2pt,
           xlabel=Lipschitz constant, ymode=log,
           ylabel=Fraction of pairs, xmin=1]
         \addplot table {lipschitz_estimation/cifar_10/fig1.txt};
         \addlegendentry {CIFAR-10}
         \addplot table {lipschitz_estimation/cifar_100/fig1.txt};
         \addlegendentry {CIFAR-100}
         \addplot table {lipschitz_estimation/imagenet32/fig1.txt};
         \addlegendentry {Imagenet32}
       \end{axis}
     \end{scope}
    \end{tikzpicture}    
    \caption{Depiction of the proportion of pairs of training examples of distinct classes incompatible with a given Lipschitz constraint on the network function, for various datasets and the $\mathcal{L}_{\infty}$ norm.}
    \vspace{-.3cm}
    \label{fig:incompatibility_lipschitz}
\end{figure}
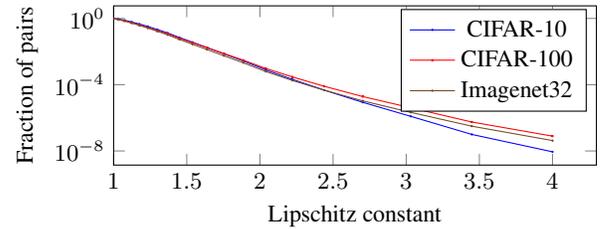

In this paper we introduce a new formal definition of robustness, which can be viewed as a {\em local} Lipschitz constant of the network function in the domain of the training samples. This definition of robustness ensures that any small deviation around a correctly classified input should not dramatically impact the decision of the network function. 
Our proposed definition can be seen as a refinement of previously proposed Lipschitz constraints, where we consider only a small radius around the training inputs, rather than constraining smoothness everywhere in the input space. We then derive reasonable sufficient conditions to enforce robustness of a deep learning architecture and show how these conditions support the use of recently introduced methods~\cite{madry,parseval,l2nonexpansive,ours}. Using experiments on several well-known vision datasets~\cite{cifar10,chrabaszcz2017downsampled} we demonstrate that our proposed definition of robustness is correlated to the robustness observed in a series of existing network training methods.


\section{Robustness definition}
\label{robustness}

Let us consider a (network) function $F$, which maps data (in an input space $\Omega$) into a soft decision for classification.
Thus $F$ is a function from an input vector space (or tensor space) to $\mathbb{R}^C$, where $C$ is typically the number of classes. We denote by $\|\cdot\|$ a fixed metric in the remaining of this work (in most cases either $\mathcal{L}_2$ or $\mathcal{L}_{\infty}$). 

We are interested in the robustness of the network function~$F$. Contrary to Lipschitz constraints, we introduce here a notion of robustness that accounts for:
\begin{enumerate}
    \item The domain $R$ on which it should be defined,
    \item The locality $r$ around each point in $R$ on which it should be enforced.
\end{enumerate}

More formally we define robust behavior as follows: 

\begin{definition}
\label{def:robustness}
We say a network function $F$ is \textbf{$\alpha$-robust} over a domain $R$ and for $r>0$, and denote $F\in {\rm Robust}_\alpha(R,r)$, if:
\begin{equation}
     \|F(\mathbf{x}+\vepsilon) - F(\mathbf{x})\| \leq \alpha \|\vepsilon\|,\forall \mathbf{x} \in R, \forall \vepsilon \text{ s.t. } \|\vepsilon\| < r\;.
\label{robustness_definition}
\end{equation}
\end{definition}

In words, $F\in \rm{Robust}_\alpha(R,r)$ if $F$ is locally $\alpha$-Lipschitz within a radius $r$ of any point in domain $R$. As such, this is equivalent to saying: $F \in \rm{Robust}_\alpha(\Omega,+\infty)$ and $F$ is $\alpha$-Lipschitz. In the remaining of this work, we are interested in enforcing robustness for a small radius $r$ around the training examples $T$.


We also define: $\alpha_{\lim}(F,r) = \inf\{\alpha: F\in \rm{Robust}_\alpha(r)\}$, where $\alpha_{\lim}(F,r)$ represents the minimum value $\alpha$ for which a region of radius $r$ is robust. This allows us to express 
robustness as a trade-off between smoothness slope, as captured by $\alpha$, and radius $r$.

Consider for instance the sigmoid function $\sigma: x\mapsto \frac{1}{1 + \exp(-x)}$ and $R = \{-10,10\}$. Figure~\ref{examplerobustness} (Left) depicts the evolution of $\alpha_{\lim}(\sigma , r)$ as function of $r$. 
We observe that the sigmoid function yields an almost 0-Lipschitz constant around the two points $-10$ and $10$ for a very small radius $r$. When the radius increases, the best Lipschitz constant also increases. The fact that  $\alpha$  is almost 0 when $r$ is small is an illustration of robustness around $R$. The sharp transition occurring for $r\approx 10$ corresponds to the boundary between classes.

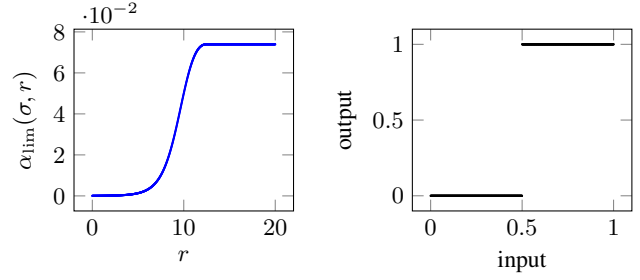
\begin{figure}[ht]
    \centering
   \begin{tikzpicture}
     \begin{scope}[scale=1]
       \begin{axis}[width=4.5cm,height=4cm,mark size=0.2pt,
           xlabel=$r$,
           ylabel={$\alpha_{\lim}(\sigma,r)$}
           ]
         \addplot table {r_alpha_lim.txt};
       \end{axis}
     \end{scope}
     \begin{scope}[scale=1,xshift=4.5cm]
       \begin{axis}[width=4.5cm,height=4cm,mark size=0.2pt,
           xlabel=input,
           ylabel=output
           ]
         \addplot[only marks] table {robustvslip.txt};
       \end{axis}
     \end{scope}

    \end{tikzpicture}    
    \caption{\textbf{Left}: Evolution of $r \mapsto \alpha_{\lim}(\sigma, r)$. \textbf{Right}: Representation of the decision of a mediator hyperplane separator between points $0$ and $1$.}
    \label{examplerobustness}
\end{figure}

\subsection{Relation with Lipschitz constants}

Note that it is immediate, by particularization, that if $F$ is $\alpha$-Lipschitz then $F\in \rm{Robust}_\alpha(r),\forall r$. But if $F\in \rm{Robust}_\alpha(r)$ for some $r$ this does not imply that $F$ is $\alpha$-Lipschitz: to illustrate this, consider a trivial classification problem where the training set is composed of two distinct vectors $\mathbf{x}$ and $\mathbf{x'}$ of distinct classes. A network function $F$ that uses the mediator hyperplane to separate the space into two halves has no Lipschitz constant because of the ``infinite'' slope close to the hyperplane, despite $\alpha_{\lim}(F, \|\mathbf{x}-\mathbf{y}\|_2/2) = 0$. See Figure~\ref{examplerobustness} (Right) for a 1D example.

This is a fundamental result, because the best Lipschitz constant $\alpha$ of a function $F$ is constrained by the dataset, i.e., if two training points of different classes are very close to each other then a zero training error classifier will have a large Lipschitz constant near those points.

%

The proposed robustness criterion is also constrained by the dataset, but allows us to reach any small $\alpha$ for a small enough $r$. Indeed, denote by~$\vc^\vx$ the class corresponding to training example $\vx$. Then, if $F$ matches a nearest neighbor classifier, we obtain that 
\begin{equation}F\in \rm{Robust}_0\left(\min_{\vx,\vx'\in T\atop \vc^\vx\neq\vc^{\vx'}}{\|\vx - \vx'\|/2}\right)\;,\label{0alpha}\end{equation}
and thus any small value for $\alpha$ is achievable within a small radius around examples.
\begin{figure*}[ht]
    \centering
   \begin{tikzpicture}
     \begin{scope}[scale=1]
       \begin{axis}[width=6cm,height=4.5cm,
           xlabel=$\alpha$,
           ylabel=Fraction of pairs, ymax=0, ymin=0.000000001,
           ymode=log,
            title=CIFAR-10,
           ]
         \addplot table {lipschitz_estimation/cifar_10/r_0.9.txt};
         \addlegendentry {$d=0.9$}
         \addplot table {lipschitz_estimation/cifar_10/r_0.7.txt};
         \addlegendentry {$d=0.7$}
         \addplot table {lipschitz_estimation/cifar_10/r_0.5.txt};
         \addlegendentry {$d=0.5$}
         \addplot table {lipschitz_estimation/cifar_10/r_0.3.txt};
         \addlegendentry {$d=0.3$}
       \end{axis}
     \end{scope}
     \begin{scope}[scale=1,xshift=5.5cm]
       \begin{axis}[width=6cm,height=4.5cm,
           xlabel=$\alpha$,
           legend pos=north east,, ymax=0, ymin=0.000000001,
            title=CIFAR-100,  ymode=log]
         \addplot table {lipschitz_estimation/cifar_100/r_0.9.txt};
         \addlegendentry {$d=0.9$}
         \addplot table {lipschitz_estimation/cifar_100/r_0.7.txt};
         \addlegendentry {$d=0.7$}
         \addplot table {lipschitz_estimation/cifar_100/r_0.5.txt};
         \addlegendentry {$d=0.5$}
         \addplot table {lipschitz_estimation/cifar_100/r_0.3.txt};
         \addlegendentry {$d=0.3$}
       \end{axis}
     \end{scope}
     \begin{scope}[scale=1,xshift=11cm]
       \begin{axis}[width=6cm,height=4.5cm, ymax=0, ymin=0.000000001,
           xlabel=$\alpha$, ymode=log,
           legend pos=north east,
            title=Imagenet32]
         \addplot table {lipschitz_estimation/imagenet32/r_0.9.txt};
         \addlegendentry {$d=0.9$}
         \addplot table {lipschitz_estimation/imagenet32/r_0.7.txt};
         \addlegendentry {$d=0.7$}
         \addplot table {lipschitz_estimation/imagenet32/r_0.5.txt};
         \addlegendentry {$d=0.5$}
         \addplot table {lipschitz_estimation/imagenet32/r_0.3.txt};
         \addlegendentry {$d=0.3$}
       \end{axis}
     \end{scope}
    \end{tikzpicture}    
    \vspace{-0.4cm}
    \caption{Depiction of the proportion of pairs of training examples of distinct classes incompatible with the proposed robustness definition for the $\mathcal{L}_{\infty}$ norm, as a function of $\alpha$ and for various values of $d$.}
    \vspace{-0.4cm}
    \label{fig:incompatibility_robustness}
\end{figure*}
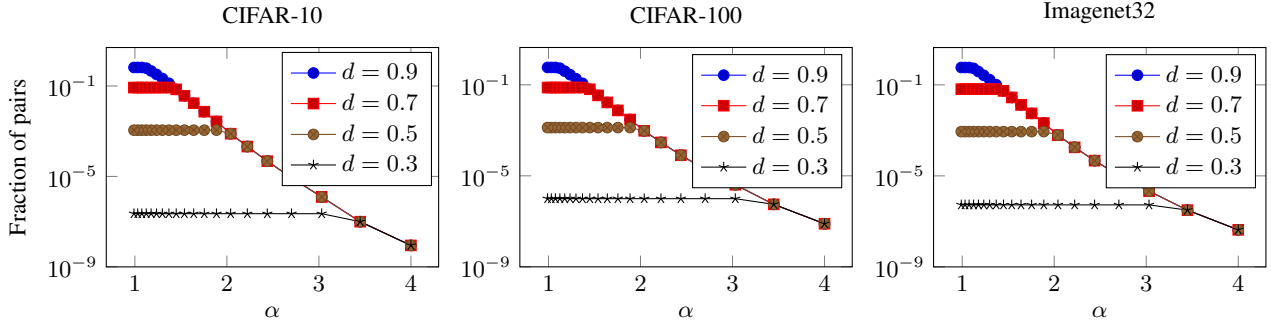

\begin{figure*}[ht]
    \centering
   \begin{tikzpicture}
     \begin{scope}[scale=1]
       \begin{axis}[width=6cm,height=4.5cm,
           xlabel=$SNR$,
           ylabel=Average test accuracy,
           legend pos=south east,
            title=CIFAR-10,
           legend columns=2,
           ]
         \addplot table {snr_accuracy_cifar10/vanilla.txt};
         \addlegendentry {V}
         \addplot table {snr_accuracy_cifar10/parseval.txt};
         \addlegendentry {P}
         \addplot table {snr_accuracy_cifar10/proposed.txt};
         \addlegendentry {L}
         \addplot table {snr_accuracy_cifar10/L2NN.txt};
         \addlegendentry {L2NN}
         \addplot table {snr_accuracy_cifar10/pgd.txt};
         \addlegendentry {PGD}
       \end{axis}
     \end{scope}
     \begin{scope}[scale=1,xshift=5.5cm]
       \begin{axis}[width=6cm,height=4.5cm,
           xlabel=$SNR$,
           legend style={
                at={(0.95,0.65)},
            },
           legend columns=1,
            title=CIFAR-100,
           ]
         \addplot table {snr_accuracy_cifar100/vanilla.txt};
         \addlegendentry {V}
         \addplot table {snr_accuracy_cifar100/parseval.txt};
         \addlegendentry {P}
         \addplot table {snr_accuracy_cifar100/proposed.txt};
         \addlegendentry {L}
       \end{axis}
     \end{scope}
     \begin{scope}[scale=1,xshift=11cm]
       \begin{axis}[width=6cm,height=4.5cm,
           xlabel=$SNR$,
           legend style={
                at={(0.36,0.99)},
            },
           legend columns=1,
            title=Imagenet32,
            xmin=30,
           ]
         \addplot table {snr_accuracy_imagenet32/vanilla.txt};
         \addlegendentry {V}
         \addplot table {snr_accuracy_imagenet32/proposed.txt};
         \addlegendentry {L}
       \end{axis}
     \end{scope}
    \end{tikzpicture}    
    \vspace{-0.4cm}
    \caption{Average test set accuracy under Gaussian noise for various datasets and methods.}
    \vspace{-0.4cm}
    \label{figure_snr}
\end{figure*}
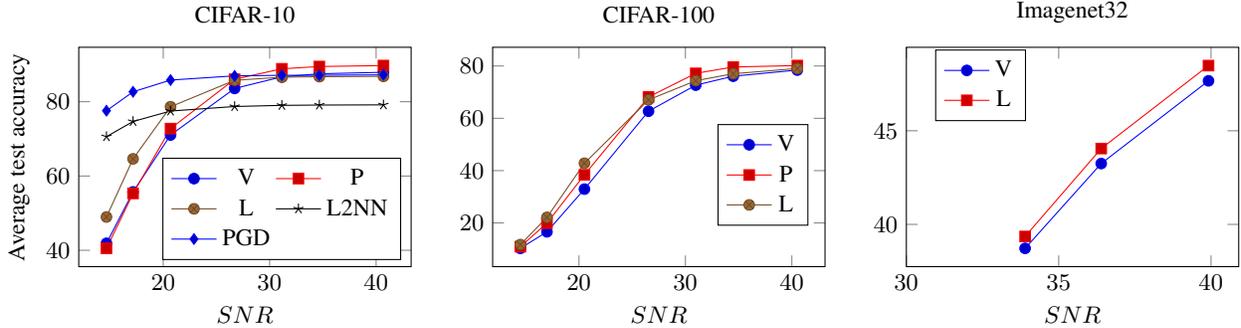

\section{Compositional Robustness}
\label{compositional}

Directly enforcing a robustness criterion on function $F$ can be hard in practice, because there are many stages that map an input vector $\vx$ to the corresponding output $F(\vx)$. Because of this, several works in the literature consider each layer of the architecture separately~\cite{parseval,l2nonexpansive,ours,manifoldmixup}. Following this idea, assume that $F$ is obtained by composing several intermediate functions $$F = f_{\ell_{\max}} \circ \dots \circ f_2 \circ f_1\;,$$ where $\ell_{\max}$ is the depth of the network. For any $\ell$, we denote by $F^\ell = f^\ell \circ \dots \circ f^2 \circ f^1$, so that $F^{\ell_{\max}} = F$. We define 
\emph{layer-robustness} as:
\begin{definition}
We say that $f^\ell$ is $\alpha$-robust over $R$ and for $r>0$ at depth $\ell$ and we denote $f^\ell \in Robust^\ell_\alpha(R,r)$ if:

$$\begin{array}{l}\|f^\ell(\vy+\vepsilon)-f^\ell(\vy)\|\leq \alpha\|\vepsilon\|,\\ \vspace{0.3cm} \forall \vy \in F^{\ell-1}(R), \forall \vepsilon \text{ s.t. } \|\vepsilon\|< r.
\end{array}$$
\end{definition}
\vspace{-.5cm}

There is a direct relation between robustness of functions $f^\ell$ 
at the various layers of the architecture and that of $F$, 
as expressed in the following proposition.

\begin{proposition}
\label{parts_proposition}
Suppose that: $$f^\ell \in \rm{Robust}^\ell_{\alpha^\ell}(R,r \prod_{\lambda\leq \ell-1}{\alpha^{\lambda}}),\forall \ell \text{ s.t. } 1\leq \ell \leq \ell_{max}\;,$$ with $\alpha^\ell \leq1 $ and denote $\alpha = \prod_{\lambda\leq \ell_{\max}}{\alpha^\lambda}$, then $$F = f^{\ell_{\max}} \circ \dots \circ f^2 \circ f^1 \in \rm{Robust}_\alpha(R,r)\;.$$
\end{proposition}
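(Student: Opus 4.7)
The plan is a straightforward induction on the layer index $\ell$, proving the auxiliary statement $F^\ell \in \mathrm{Robust}_{\beta_\ell}(R, r)$ where $\beta_\ell := \prod_{\lambda \leq \ell} \alpha^\lambda$. Taking $\ell = \ell_{\max}$ recovers the proposition, since $\beta_{\ell_{\max}} = \alpha$. The base case $\ell = 1$ is immediate: under the empty product convention, the hypothesis at depth $1$ reads $f^1 \in \mathrm{Robust}^1_{\alpha^1}(R, r)$, which is literally $F^1 \in \mathrm{Robust}_{\beta_1}(R, r)$.

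For the inductive step I would fix $\vx \in R$ and $\vepsilon$ with $\|\vepsilon\| < r$, and propagate the perturbation through one more layer. Set $\vy := F^{\ell-1}(\vx) \in F^{\ell-1}(R)$ and $\vepsilon' := F^{\ell-1}(\vx + \vepsilon) - \vy$. The inductive hypothesis gives $\|\vepsilon'\| \leq \beta_{\ell-1}\|\vepsilon\| < r\,\beta_{\ell-1}$, which is precisely the radius under which $f^\ell$ is assumed to be $\alpha^\ell$-robust at points of $F^{\ell-1}(R)$. Applying that hypothesis at $\vy$ with perturbation $\vepsilon'$ yields
$$\|F^\ell(\vx+\vepsilon) - F^\ell(\vx)\| = \|f^\ell(\vy+\vepsilon') - f^\ell(\vy)\| \leq \alpha^\ell \|\vepsilon'\| \leq \alpha^\ell \beta_{\ell-1}\|\vepsilon\| = \beta_\ell \|\vepsilon\|,$$
closing the induction.

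The only delicate point, and the main thing to verify, is that the admissible radii line up correctly across layers: after $\ell - 1$ stages of composition, the perturbation has been dilated by at most $\beta_{\ell-1}$, and the statement's choice of layer-$\ell$ radius $r\prod_{\lambda \leq \ell-1} \alpha^\lambda$ is exactly calibrated to swallow this dilation. The condition $\alpha^\ell \leq 1$ plays the auxiliary role of making the sequence $(\beta_\ell)$ non-increasing, so propagated perturbations stay contractive and the resulting global constant $\alpha \leq 1$ is meaningful as a robustness constant rather than an explosive blow-up.
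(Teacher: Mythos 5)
Your proof is correct and follows the natural induction on depth: the paper omits its own proof (deferring to an external link), but this telescoping composition argument, with the layer-$\ell$ radius $r\prod_{\lambda\leq\ell-1}\alpha^{\lambda}$ exactly absorbing the dilation $\beta_{\ell-1}$ accumulated by the first $\ell-1$ layers, is precisely the intended one. The radius calibration you single out is indeed the only delicate point, and you handle it correctly (the degenerate case $\beta_{\ell-1}=0$ forces $\vepsilon'=0$ and is trivially fine).
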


\begin{proof}
The proof is omitted due to lack of space, but available online at \url{https://github.com/cadurosar/structural_robustness/blob/master/Proof.pdf}.
\end{proof}

Note that the condition on $f^\ell$ is less demanding if all previous layers were yielding small values of $\alpha^\ell$ (as the demanded radius for $f^\ell$ robustness is smaller).
We thus observe there would be multiple possible strategies to enforce robustness of $F$ in practice: a) forcing all layers to provide similar robustness or b) focusing only on a few layers of the architecture. Most proposed methods in the literature~\cite{parseval,l2nonexpansive,ours} are aiming at enforcing a).
In fact, option b) could be too restrictive and prevent the learning procedure to converge.

\vspace{-0.2cm}

\section{Relation with existing methods}
\label{regul}
\vspace{-0.2cm}

We evaluate four of the prior works on the literature from the perspective of our proposed robustness measure, namely: Parseval networks (P)~\cite{parseval}, $\mathcal{L}_2$ non-expansive networks (L2NN)~\cite{l2nonexpansive}, Laplacian networks (L)~\cite{ours} and Projected Gradient Descent adversarial data augmentation (PGD)~\cite{madry}. See Table~\ref{table:summary} for a summary.

In~\cite{parseval}, networks are trained to be $\alpha$-Lipschitz for the $\mathcal{L}_2$ and $\mathcal{L}_\infty$ norms in order to achieve robustness. This is achieved by applying a regularizer such that the  weight matrix is, approximately, a Parseval tight frame~\cite{kovavcevic2008introduction}. Among the four methods we consider, this is the only one that leads to improved performance on the clean test set. However, note that \cite{parseval} does not strictly enforce the $\alpha$-Lipschitz constraint, as it disregards batch normalization layers and uses a very small regularization factor. This is why it does not prevent the loss from going to 0 (which theoretically, as seen in  Figure~\ref{fig:incompatibility_lipschitz}, could only be achieved if $\alpha$ is large). This also explains why this method achieves worse results in robustness than L2NN~\cite{l2nonexpansive}. In terms of the proposed definition of robustness, this is a global method that targets the $\text{Robust}_\alpha(r)$ metric for $r \rightarrow +\infty$, penalizing large slopes in the network function between any two points. We will see that more localized approaches (targeting finite $r$) achieve improved robustness. We denote this method P in the remaining of this work.

L2NN~\cite{l2nonexpansive} enforces the network to be $\alpha$-Lipschitz only in terms of the $\mathcal{L}_2$ norm, but does it with a stricter criterion: contrary to P, there is no regularizer to enforce this condition, which is built into the structure of the network itself. \cite{l2nonexpansive} notes that enforcing a global $\alpha$-Lipschitz constant is by itself too hard and that the distances between examples should not collapse throughout the network architecture. As such, they also limit the contraction of space. This seems to be the most robust against $\mathcal{L}_2$ attacks of the four methods we consider. It has also been shown to combine well with PGD training. However, it is also the method that performs the worse on the clean test set.

In~\cite{ours}, we applied a regularization at each ReLU activation in the architecture to enforce that the average distance between examples of different classes remain almost constant from layer to layer. This is achieved by exploiting the smoothness of the class indicator signal across the graph generated by intermediate representations at a given layer. In terms of Definition~\ref{robustness_definition}, this method focuses on pairs of examples of distinct classes 
and tries to restrict changes in their $\mathcal{L}_2$ distance. Thus, \cite{ours} indirectly penalizes changes in local smoothness:  if we consider (\ref{robustness_definition}) with $F(.)$ chosen to be 
the function that assigns to each example its true label, and we do not allow the average $r$ between opposite class examples to change much, then the corresponding $\alpha$ will change slowly with the training.
Note that this approach and Parseval were shown to complement each other in \cite{ours}. We hypothesize that this can be explained because Parseval focuses on global constraints 
 while \cite{ours} favors increased robustness by preserving structure -- and thus smoother network function transitions -- around class boundaries. We denote this method L in the remaining of this work.

Finally, PGD adversarial training~\cite{madry} is a data augmentation procedure that generates adversarial examples during the training phase, by doing multiple iterations of the FGSM method. This leads to a min-max game between the network and the examples generation, and is the best adversarial data augmentation that we are aware of. It works mostly on the domain $T$, as it increases its size and also decreases the difference between $T$ and a noisy test domain. This leads to less domain shift against noisy images on the test, but on the other hand it increases the domain shift to clean images. As a result, the networks perform well against noise (isotropic or adversarial) but have problems with the clean examples. Of the four methods, this is the only to be applied on the harder Imagenet~\cite{kannan2018adversarial} task (but only against the weaker targeted white box attack as noted  in~\cite{engstrom2018evaluating}).

\begin{table}[]
    \centering
    \begin{tabular}{c|c|c|c|c}
\hline
         Method & Domain ($R$) & Slope ($\alpha$) & Locality ($r$) & Metric \\
        \Xhline{2\arrayrulewidth}
         
         P & $\Omega$ & Yes & No & $\mathcal{L}_2$ + $\mathcal{L}_{\infty}$\\
         L2NN & $\Omega$ & Yes & No & $\mathcal{L}_2$\\
         L & $T$ & Approx. & Yes & $\mathcal{L}_2$ + cos\\
         PGD & augmented $T$ & No & Yes & $\mathcal{L}_{\infty}$\\ \hline
    \end{tabular}
    \caption{Summary of the methods and the notions of the introduced robustness they consider.}
    \vspace{-0.5cm}
    \label{table:summary}
\end{table}

\vspace{-0.2cm}

\section{Experiments}
\label{experiments}
\vspace{-0.2cm}

We perform several experiments to evaluate our robustness metric (Definition~\ref{def:robustness}) and its relation to actual network robustness. Vanilla (V), Parseval (P) and Laplacian (L) refer to the networks trained in~\cite{ours}, PGD to the network trained in~\cite{madry} and L2NN to the network trained in~\cite{l2nonexpansive}. Note that this direct comparison with the baseline is not completely fair, as the networks and hyperparameters for different papers are not the same. For example, PGD has more layers and parameters, and uses non-adversarial data augmentation during training, while L2NN does not use a residual architecture. 

Figure~\ref{fig:incompatibility_robustness} shows, as a function of $\alpha$, the ratio between i) the number of examples within distance $d$ of each other that are not $\alpha$-robust (for the $\mathcal{L}_{\infty}$ norm) and ii) the total number of example pairs. Note that $d$ should be roughly interpreted as $2r$ in our definition of robustness.  As in Figure~\ref{fig:incompatibility_lipschitz}, the network function is taken to be the one-hot-bit encoded vectors of the corresponding classes. 
Note that for each choice of $d$ the curve is initially flat and then drops. In the flat section {\em all} pairs within $d$ are {\em not} $\alpha$-robust. Interestingly, this figure shows that the number of pairs of examples in distinct classes that are closer than $d$ in the input space drops very fast as $d$ is decreased. The amount becomes negligible for $d=0.3$ so that it becomes theoretically possible to find a robust network function that is compatible with {\em almost all} pairs of the training set.


Figure~\ref{ralpha} shows the evolution of $\alpha_{\lim}(r)$ as a function of $r$ for the various methods. We use 100 training examples and 1000 Gaussian noise realizations on the CIFAR-10 dataset to estimate $\alpha_{\lim}(\cdot)$. Note that for all methods $\alpha$ increases as a function of $r$ and eventually saturates. Vanilla (V) saturates fastest and at largest value of $\alpha$ because i) sharp transitions in the network function over short distances are allowed and ii) the network function produces outputs closest to the one-hot-bit encoded vector (since V can achieve zero error on the training set). 
In contrast, for all the other methods $\alpha_{\lim}$ grows more slowly with $r$ and  saturates at a lower value, indicating that transitions in the network function are not as sharp and some examples are misclassified. 
Such a compromise between accuracy on the training set and robustness has been discussed in~\cite{Fawzi2018}. 

The fact that, for both P and L, $\alpha$ saturates for larger $r$ suggests that the margin between the examples and the boundary is increased compared to Vanilla. L2NN and PGD saturate at the lowest $\alpha$ values. We observe a transition for PGD occurring at around $r=0.3$ whereas L2NN remains almost constant. This is due to the fact L2NN enforces a strong Lipschitz constraint (using $\mathcal{L}_2$ norm) everywhere on the function: as a result, the network function is almost linear between the training samples. As seen in Figure~\ref{fig:incompatibility_lipschitz}, this creates strong incompatibilities with the training dataset, which is why L2NN achieves the worst performance on the clean set (c.f. Table~\ref{table:robustness}).

We compare methods in terms of robustness on a recently proposed benchmark~\cite{hendrycks2019robustness}. 
The results in Table~\ref{table:robustness} show that PGD achieves the best accuracy and robustness trade-off. Note that for PGD, our robustness metric saturates at a relatively small $\alpha$ and grows for $r$ between 0.2 and 0.4 which correspond to a reasonable range of values in $d\approx2r$ as seen in  Figure~\ref{fig:incompatibility_robustness}.  
Table~\ref{table:robustness} along, with the behavior of PGD in Figure~\ref{ralpha}, suggest that improved robustness is achievable when the network function is smooth locally near the class boundaries, i.e., $\alpha_{\lim}$ grows in typical range separating examples in different classes and saturates at a relative small value. 
Finally, in Figure~\ref{figure_snr} we can see that the relative robustness performance of all methods under Gaussian noise condition is the same as that in Figure~\ref{ralpha}\footnote{Note that we did not report the results for P for the case of Imagenet32 since we did not find right parameters to obtain a good accuracy on the clean test set. Also PGD and L2NN results are not reported in the case of CIFAR-100 and Imagenet32 as pretrained networks were not available.}.

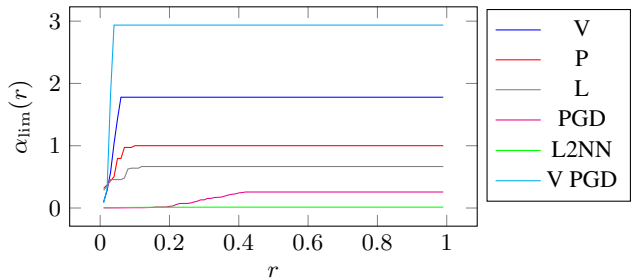
\begin{figure}[ht]
 \begin{center}
   \begin{tikzpicture}[]
       \begin{axis}[width=7cm, height=4.5cm,
           xlabel=$r$,
           ylabel=$\alpha_{\lim}(r)$,
           legend style={at={(1.2,0.99)},anchor=north},
           legend columns = 1]
         \addplot[color=blue] table {r_alpha/vanilla.txt};
         \addlegendentry{V}
         \addplot[color=red] table {r_alpha/parseval.txt};
         \addlegendentry{P}
         \addplot[color=gray] table {r_alpha/laplacian.txt};
         \addlegendentry{L}
         \addplot[color=magenta] table {r_alpha/PGD.txt};
         \addlegendentry{PGD}
         \addplot[color=green] table {r_alpha/L2NN.txt};
         \addlegendentry{L2NN}
         \addplot[color=cyan] table {r_alpha/vanilla_pgd.txt};
         \addlegendentry{V PGD}
       \end{axis}
   \end{tikzpicture}
 \end{center}
    \vspace{-0.3cm}
    \caption{Estimations of $\alpha_{\lim}(r)$ obtained for different radius $r$ over training examples with the $\mathcal{L}_\infty$ norm.}
    \label{ralpha}
\end{figure}


\begin{table}[ht]
\begin{center}
\resizebox{\columnwidth}{!}{%
\begin{tabular}{c|cccccc}
\hline
Dataset          & V       & P                & L       & PGD             & L2NN   & V PGD       \\ 
\Xhline{2\arrayrulewidth}
Clean            & 11.9\%  & \textbf{10.2\%}  & 13.2\%  & 12.8\%          & 20.9\% & 5.0\%        \\
Corrupted        & 31.6\%  & 30.5\%           & 31.3\%  & \textbf{18.8\%} & 28.5\% & 24.4\%     \\
relativeMCE      & 100     & 103              & 92      & \textbf{30}     & 39     & 98     \\  
relativeMCE VPGD & 102     & 105              & 93      & \textbf{31}     & 39     & 100              \\
\hline
\end{tabular}
}
\caption{Test set error on the CIFAR-10 dataset under different image conditions. Corrupted refers to the mean test set error under the 15 image corruption benchmark from~\cite{hendrycks2019robustness}. RelativeMCE for method $X$ is obtained as $100 (Corrupted(X) - Clean(X))/(Corrupted(V) - Clean(V))$.}
\label{table:robustness}
\end{center}
\end{table}


\section{Conclusion}
\label{conclusion}

We have introduced a formal definition of robustness to deviations inside a norm-ball of radius $r$ around the training set. We have shown that this definition can be applied to each part of the network separately. We derived theoretical and empirical relations between our proposed definition and existing methods in the literature.
Future work include looking at other types of perturbations (that do not fall in the norm-ball radius around examples) and fine-tuning already trained networks to improve their robustness.

\bibliography{main}
\bibliographystyle{IEEEbib}

\addtolength{\textheight}{-12cm}   

\end{document}